\begin{document}

\title{Quantum Computing Provides Exponential Regret Improvement in Episodic Reinforcement Learning}

\author{\name Bhargav Ganguly$^1$,  \name Yulian Wu$^2$, \name Di Wang$^2$, and \name Vaneet Aggarwal$^{1, 2}$\\
		\addr $^1$ Purdue University, West Lafayette, IN 47907, USA\\
		\addr $^2$ KAUST, Thuwal 23955, KSA\\
		\email bganguly@purdue.edu, yulian.wu@kaust.edu.sa, di.wang@kaust.edu.sa, vaneet@purdue.edu
}
	\if 0
	\addr Department of Statistics\\
	University of Washington\\
	Seattle, WA 98195-4322, USA
	\AND
	\name Michael I.\ Jordan \email jordan@cs.berkeley.edu \\
	\addr Division of Computer Science and Department of Statistics\\
	University of California\\
	Berkeley, CA 94720-1776, USA
\fi 
\editor{}

\maketitle

\begin{abstract}
 In this paper, we investigate the problem of \textit{episodic reinforcement learning} with  quantum oracles for state evolution. To this end, we propose an \textit{Upper Confidence Bound} (UCB) based quantum algorithmic framework to facilitate learning of a finite-horizon MDP. Our quantum algorithm achieves an exponential improvement in regret as compared to the classical counterparts, achieving a regret of $\Tilde{\mathcal{O}}(1)$ as compared to $\Tilde{\mathcal{O}}(\sqrt{K})$ \footnote{$\Tilde{\mathcal{O}}(\cdot)$ hides logarithmic terms.}, $K$ being the number of training episodes. In order to achieve this advantage, we exploit efficient quantum mean estimation technique that provides quadratic improvement in  the number of i.i.d. samples needed to estimate the mean of sub-Gaussian random variables as compared to classical mean estimation. This improvement is a key to the significant regret improvement in quantum reinforcement learning. We provide proof-of-concept experiments on various RL environments that in turn demonstrate performance gains of the proposed algorithmic framework.

\end{abstract}

\section{Introduction}\label{sec:intro}
\textit{Quantum Machine Learning} (QML) is an emerging domain built at the confluence of quantum information processing and \textit{Machine Learning} (ML) \citep{saggio2021experimental}. A noteworthy volume of prior works in QML demonstrate how quantum computers could be effectively leveraged to improve upon classical results pertaining to classification/regression based predictive modeling tasks \citep{aimeur2013quantum, rebentrost2014quantum, arunachalam2017guest}. While the efficiency of QML frameworks have been shown on conventional supervised/unsupervised ML use cases, how similar improvements could be translated to \textit{Reinforcement Learning} (RL) tasks have gained significant attention recently and is the focus of this paper.

Traditional RL tasks comprise of an agent interacting with an external environment attempting to learn its configurations, while collecting rewards via actions and state transitions \citep{sutton2018reinforcement}.  RL techniques have been credibly deployed at scale over a variety of agent driven decision making industry use cases, e.g., autonomous navigation in self-driving cars \citep{al2019deeppool}, recommendation systems in e-commerce websites \citep{rohde2018recogym}, and online gameplay agents such as AlphaGo \citep{silver2017mastering}. Given the wide applications, this paper aims to study if quantum computing can help further improve the performance of reinforcement learning algorithms. This paper considers an episodic setup, where the  learning occurs in episodes with a finite horizon. The performance measure for algorithm design is the \textit{ regret} of agent's rewards \citep{mnih2016asynchronous, cai2020provably}, which measures the gap in the obtained rewards by the Algorithm and the optimal algorithm. A central idea in RL algorithms is the notion of \textit{exploration-exploitation} trade-off, where agent's policy is partly constructed on its experiences so far with the environment as well as injecting a certain amount of optimism to facilitate exploring sparsely observed policy configurations \citep{kearns2002near, jin2020provably}. In this context, we emphasize that our work adopts the well-known \textit{Value Iteration} (VI) technique which combines empirically updating state-action policy model with \textit{Upper Confidence Bound} (UCB) based strategic exploration \citep{azar2017minimax}.

In this paper, we show an exponential improvement in the  regret of reinforcement learning. The key to this improvement is that quantum computing allows for  improved \textit{mean estimation} results over classical algorithms \citep{brassard2002quantum, hamoudi2021quantum}. Such mean estimators feature in very recent studies of \textit{quantum bandits} \citep{wang2021quantum}, \textit{quantum reinforcement learning} \citep{wang2021quantum2}, thereby leading to noteworthy convergence gains. In our proposed framework, we specifically incorporate a quantum information processing technique that improves non-asymptotic bounds of conventional empirical mean estimators which was first demonstrated in \citep{hamoudi2021quantum}.
In this regard, it is worth noting that a crucial novelty pertaining to this work is carefully engineering agent's interaction with the environment in terms of collecting classical and quantum signals. We further note that one of the key aspects of analyzing reinforcement learning algorithms is the use of Martingale convergence theorems, which is incorporated through the stochastic process in the system evolution. Since there is no result on improved Martingale convergence results in quantum computing so far (to the best of our knowledge), this work does a careful analysis to approach the result without use of Martingale convergence results.

Given the aforementioned quantum setup in place, this paper attempts to address the following:
{\em Can we design a quantum VI Algorithm that can improve classical  regret bounds in episodic RL setting?}

This paper answers the question in positive. The key to achieve such quantum advantage is the use of quantum environment that provides more information than just an observation of the next state. This enhanced information is used with quantum computing techniques to obtain efficient  reget bounds in this paper. 

To this end, we summarize the major contributions of our work as follows:
\begin{enumerate} %
	\item We present a novel quantum RL architecture that helps exploit the quantum advantage in episodic reinforcement learning.

	\item We propose QUCB-VI, which builds on the classical UCB-VI algorithm \citep{azar2017minimax}, wherein we carefully leverage available quantum information and quantum mean estimation techniques to engineer computation of agent's policy.  
	\item We perform rigorous theoretical analysis of the proposed framework and characterize its performance in terms of \textit{ regret} accumulated across $K$ episodes of agent's interaction with the unknown \textit{Markov Decision Process} (MDP) environment. More specifically, we show that QUCB-VI incurs $\Tilde{\mathcal{O}}(1)$  regret. We note that our algorithm provides a faster convergence rate in comparison to classical UCB-VI which accumulates $\Tilde{\mathcal{O}} (\sqrt{K})$ regret, where $K$ is the number of training  episodes.
	\item We conduct thorough experimental analysis of QUCB-VI (algorithm \ref{algo: UCBVI}) and compare against baseline classical UCB-VI algorithm on a variety of benchmark RL environments. Our experimental results reveals QUCB-VI's performance improvements in terms of regret growth over baseline.   
\end{enumerate}

The rest of the paper is organized as follows.  In Section \ref{sec: related_work}, we present a brief background of key existing literature pertaining to classical RL, as well as discuss prior research conducted in development of quantum mean estimation techniques and quantum RL methodologies relevant to our work.  In Section \ref{sec: problem_formln}, we mathematically formulate the problem of episodic RL in a finite horizon unknown MDP with the use of quantum oracles in the environment.  In Section \ref{sec: algo_framework}, we describe the proposed QUCB-VI Algorithm while bringing out key differences involving agent's policy computations as compared to classical UCB-VI. Subsequently, we provide the formal  analysis of  regret for the proposed algorithm in Section \ref{sec: theory_result}. In Section \ref{sec: experiments}, we report our results of experimental evaluations performed on various RL environments for the proposed algorithm and classical baseline method. Section \ref{sec: conclusion} concludes the paper.
\section{Background and Related Work} \label{sec: related_work}
\textbf{Classical reinforcement learning:} ~In the context of classical RL, an appreciable segment of prior research focus on obtaining theoretical results in \textit{tabular} RL, i.e., agent's state and action spaces are discrete \citep{sutton2018reinforcement}. Several existing methodologies guarantee sub-linear \textit{ regret} in this setting via leveraging \textit{optimism in the face of uncertainty} (OFU) principle \citep{lai1985asymptotically}, to strategically  balance \textit{exploration-exploitation} trade-off  \citep{osband2016generalization, strehl2006pac}. Furthermore, on the basis of design requirements and problem specific use cases, such algorithms have been mainly categorized as either \textit{model-based} ~\citep{auer2008near, dann2017unifying} or \textit{model-free} ~\citep{jin2018q, du2019provably}. In the episodic tabular RL problem setup, the optimal \textit{ regret} of $\Tilde{\mathcal{O}}(\sqrt{K})$ ($K$ is the number of episodes) have been studied for both \textit{model-based} as well as \textit{model-free} learning frameworks \citep{azar2017minimax, jin2018q}. In this paper, we study model-based algorithms and derive $\Tilde{\mathcal{O}}(1)$ regret with the use of quantum environment.

\textbf{Quantum Mean Estimation:} ~
Mean estimation is a statistical inference problem in which samples are used to produce an estimate of the mean of an unknown distribution. The improvement in sample complexity for mean estimation using quantum computing has been widely studied \citep{grover1998framework, brassard2002quantum, brassard2011optimal}. In \citep{montanaro2015quantum}, a quantum information assisted Monte-Carlo Algorithm was proposed which achieves an asymptotic near-quadratic faster convergence over its classical baseline. %
In this paper, we use the approach in \citep{hamoudi2021quantum} for the mean estimation of quantum random variables. We first describe the notion of random variable and corresponding extension to quantum random variable. 

\begin{definition}[Random Variable]
	A finite random variable is a function $X: \Omega \to E$ for some probability space $(\Omega, P)$, where $\Omega$ is a finite sample set, $P:\Omega\to[0, 1]$ is a probability mass function and $E\subset \mathbb{R}$ is the support of $X$. As is customary, we will often omit to mention $(\Omega, P)$ when referring to the random variable $X$.
\end{definition}

The notion is extended to a quantum random variable (or q.r.v.) as follows. 

\begin{definition}[Quantum Random Variable]
	A q.r.v. is a triple $(\mathcal{H},U,M)$ where $\mathcal{H}$ is a finite-dimensional Hilbert space, $U$ is a unitary transformation on $\mathcal{H}$, and $M = \{M_x\}_{x\in E}$ is a projective measurement on $\mathcal{H}$ indexed by a finite set $E \subset \mathbb{R}$. Given a random variable $X$ on a probability space $(\Omega, P)$, we say that a q-variable $(\mathcal{H}, U, M )$ generates $X$ when,
	\begin{itemize}
		\item[(1)] $\mathcal{H}$ is a finite-dimensional Hilbert space with some basis $\{|\omega \rangle\}_{\omega \in \Omega}$ indexed by $\Omega$.
		\item[(2)] $U$ is a unitary transformation on $\mathcal{H}$ such that $U|\mathbf{0}\rangle=\sum_{\omega \in \Omega} \sqrt{P(\omega)}|\omega\rangle$.
		\item[(3)] $M = \{M_x\}_{x}$ is the projective measurement on $\mathcal{H}$ defined by $M_x=\sum_{\omega: X(\omega)=x}|\omega\rangle\langle\omega|$.
	\end{itemize}
\end{definition}

We now define the notion of a quantum experiment.     Let $(\mathcal{H},U,M)$ be a q.r.v. that generates $X$. With abuse of notations, we call $X$ as the q.r.v. even though the actual q.r.v. is the $(\mathcal{H},U,M)$ that generates $X$. We define a quantum experiment as the process of applying any of the unitaries $U$, their
inverses or their controlled versions, or performing a measurement according to $M$. We also assume an access to the  quantum evaluation oracle $|\omega\rangle|0\rangle \to |\omega\rangle|X(\omega)\rangle$. Using this quantum oracle, the quantum mean estimation result can be stated as follows.

\begin{lemma}[Sub-Gaussian estimator \citep{hamoudi2021quantum}]
	\label{lem: SubGau}
	Let $X$ be a q.r.v. with mean $\mu$ and variance $\sigma^2$. Given  $n$ i.i.d. samples of q.r.v. $X$ and a real $\delta \in (0,1)$ such that $n > \log(1/\delta)$, a quantum algorithm \texttt{SubGaussEst}$(X,n,\delta)$ (please refer to algorithm 2 in \citep{hamoudi2021quantum}) outputs a mean estimate $\hat{\mu}$ such that,
	\begin{align}
		P\left[|\hat{\mu}-\mu|\le \frac{\sigma \log(1/\delta)}{n}\right]\ge 1-\delta.
	\end{align}
	The algorithm performs $O(n\log^{3/2}(n)\log\log(n))$ quantum experiments.
\end{lemma}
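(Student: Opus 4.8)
The plan is to reconstruct the argument of \citet{hamoudi2021quantum}, and the first thing to stress is that ``sub-Gaussian'' here qualifies the \emph{output} $\hat\mu$, not $X$: we assume only that $X$ has finite variance $\sigma^2$, yet we want the error to be as small as $\varepsilon := \sigma\log(1/\delta)/n$. This is the sub-Gaussian-type rate, and in terms of $n$ it is quadratically smaller than what the best classical estimator (median-of-means) can guarantee under the same finite-variance hypothesis, namely $O(\sigma\sqrt{\log(1/\delta)/n})$. The engine behind the speedup is quantum amplitude estimation: for a \emph{bounded} q.r.v. $Y$ one can estimate $\mathbb{E}[Y]$ to additive error $\eta$ using only $\tilde O(\sqrt{\mathrm{Var}(Y)}/\eta)$ quantum experiments at constant confidence, a quadratic improvement over the classical $\mathrm{Var}(Y)/\eta^2$. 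I would take this variance-adaptive bounded-mean primitive as a black box (it is derived from the amplitude-estimation routine of \citet{brassard2002quantum} and is the content of \citet{montanaro2015quantum}) and reduce the finite-variance problem to it.

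The reduction is by truncation, and this step must use only the second moment, never a tail hypothesis. Writing $L := \log(1/\delta)$, I would first obtain a crude location $\tilde\mu$ with $|\tilde\mu-\mu| = O(\sigma)$ from a cheap preliminary subroutine (a few-sample quantile/median estimate), then define the centered truncation $Y := (X-\tilde\mu)\,\mathbf{1}[\,|X-\tilde\mu|\le t\,]$ for a threshold $t$ to be chosen, and report $\hat\mu = \tilde\mu + \widehat{\mathbb{E}[Y]}$. The truncation bias is controlled purely by the second moment: from $|X-\tilde\mu|\,\mathbf{1}[|X-\tilde\mu|>t]\le (X-\tilde\mu)^2/t$ we get $|\mathbb{E}[Y]-(\mu-\tilde\mu)| \le \mathbb{E}[(X-\tilde\mu)^2]/t = O(\sigma^2/t)$, since $\mathbb{E}[(X-\tilde\mu)^2] = \sigma^2 + (\mu-\tilde\mu)^2 = O(\sigma^2)$. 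Choosing $t \asymp \sigma n/L$ forces the bias below $\varepsilon$. Crucially, $Y$ is now bounded in $[-t,t]$ and still has $\mathrm{Var}(Y)\le \mathbb{E}[(X-\tilde\mu)^2] = O(\sigma^2)$, so the variance-adaptive primitive applies with variance proxy $O(\sigma^2)$ rather than anything growing with $t$.

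I would then run the bounded primitive on $Y$ to estimate $\mathbb{E}[Y]$ to additive error $\varepsilon = \sigma L/n$. Its cost is $\tilde O(\sqrt{\mathrm{Var}(Y)}/\varepsilon) = \tilde O(\sigma/\varepsilon) = \tilde O(n/L)$ experiments at constant confidence, so boosting to confidence $1-\delta$ by a median over $O(L) = O(\log(1/\delta))$ independent runs gives $\tilde O(n)$ experiments in total; the stated $\log^{3/2}(n)\log\log(n)$ factor is exactly the overhead of the amplitude-estimation repetitions nested inside this median-of-means. Combining the $\le\varepsilon$ estimation error with the $\le\varepsilon$ truncation bias, and budgeting the failure probabilities of the crude-location step and the median step so that they sum to at most $\delta$, yields $|\hat\mu-\mu|\le O(\sigma\log(1/\delta)/n)$, which is the claim up to the constant absorbed into the statement.

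The hard part will be the threshold calculus, because the two error sources pull $t$ in opposite directions: enlarging $t$ kills the bias $O(\sigma^2/t)$ but widens the range of $Y$, and amplitude estimation is a priori sensitive to that range. The entire quadratic advantage hinges on the primitive's cost scaling with $\sqrt{\mathrm{Var}(Y)}$ rather than with $\mathrm{range}(Y)$, so the delicate bookkeeping is to certify that with $t \asymp \sigma n/L$ it is the variance bound $O(\sigma^2)$ and not the range $2t$ that governs the experiment count, to show the preliminary location estimate is $O(\sigma)$-accurate with adequate confidence at negligible cost, and to use $n > \log(1/\delta)$ to keep $\varepsilon < \sigma$ in the meaningful regime throughout. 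This is precisely the route that sidesteps any Martingale or sub-Gaussian-tail argument, replacing it with a second-moment truncation fed into the quantum variance-adaptive estimator.
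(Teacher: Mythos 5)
First, note that the paper does not actually prove this lemma: it is imported verbatim from \citep{hamoudi2021quantum}, with the statement itself pointing to Algorithm~2 of that work, so the only meaningful comparison is against Hamoudi's original argument. Measured against that, your skeleton is broadly faithful: crude location estimate, second-moment (Chebyshev-type) truncation with bias $O(\sigma^2/t)$, a variance-adaptive amplitude-estimation primitive on the bounded remainder, and a median over $O(\log(1/\delta))$ constant-confidence runs; your cost arithmetic (per-run error $\sigma\log(1/\delta)/n$ at cost $n/\log(1/\delta)$, total $\tilde{O}(n)$ with the $\log^{3/2}(n)\log\log(n)$ overhead inherited from the amplitude-estimation schedule of \citep{montanaro2015quantum, brassard2002quantum}) is also consistent with the stated experiment count.

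The genuine gap is that your algorithm cannot be run with the stated interface. \texttt{SubGaussEst} takes only $(X,n,\delta)$, yet both your truncation threshold $t \asymp \sigma n/L$ and the accuracy target $\varepsilon = \sigma L/n$ you hand to the bounded primitive presuppose knowledge of $\sigma$, which is unknown; the guarantee must hold with the true variance appearing only in the conclusion. Worse, the black box you invoke has the same defect one level down: Montanaro's variance-bounded estimator requires a known upper bound on $\mathrm{Var}(Y)$ to normalize, and with only a crude bound its error scales with that bound rather than with $\sigma$ — surrendering exactly the $\sigma$-adaptivity that separates the sub-Gaussian rate $\sigma\log(1/\delta)/n$ from $\log(1/\delta)/n$. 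This is precisely where Hamoudi's proof departs from your sketch: he sets the truncation point not at $\sigma n/L$ but at a data-driven quantile of $|X-m|$ located by a quantum quantile-estimation subroutine (so the threshold is $\sigma$-free), and then estimates the truncated mean by amplitude estimation applied over dyadic level sets with geometrically varying precision, with the second-moment bound certifying both the bias and the total budget. Without the quantile step or some other $\sigma$-free surrogate, your construction as written is incomplete. It is worth adding that for the use this paper actually makes of the lemma — Bernoulli indicators $X_{s',h}^{k}$, where the bound in Lemma~\ref{lemma: model_err_SA} drops $\sigma$ and keeps only $\log(1/\delta)/N_h^k(s,a)$ — your simplified known-bound version (take $\sigma \le 1$) would in fact suffice, but it does not prove the lemma as stated.
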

\vspace{-2mm}
We note that this result achieves the mean estimation error of $1/n$ in contrast to $1/\sqrt{n}$ for the classical mean estimation, thus providing a quadratic reduction in the number of i.i.d. samples needed for same error bound.

\textbf{Quantum reinforcement learning:} ~Recently, quantum mean estimation techniques have been applied with favorable theoretical convergence speed-ups for Quantum \textit{multi-armed bandits} (MAB) problem setting \citep{casale2020quantum,wang2021quantum,lumbreras2022multi}. However, bandits do not have the notion of state evolution like in reinforcement learning. Further, quantum reinforcement learning has been studied in  \citep{paparo2014quantum, dunjko2016quantum,dunjko2017advances, jerbi2021quantum,dong2008quantum}, while these works do not study the regret performance. The theoretical regret performance has been recently studied in \citep{wang2021quantum}, where a generative model is assumed and sample complexity guarantees are derived for discounted infinite horizon setup. In contrast, our work does not consider discounted case, and we don't assume a generative model. This paper demonstrates the quantum speedup for episodic reinforcement learning. 

\section{Problem Formulation} \label{sec: problem_formln}

\begin{figure}
	\centering
	\includegraphics[width=3.25in]{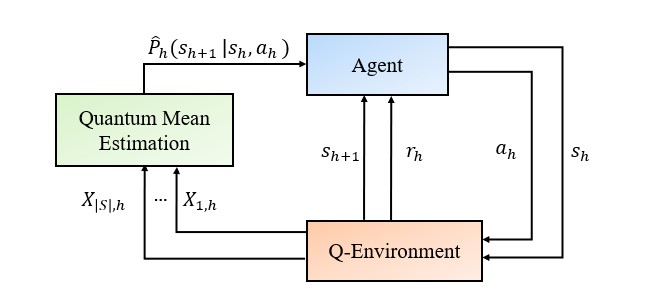}
	\caption{Quantum episodic reinforcement learning architecture depicting agent-environment interaction at round $h$.}
	\label{fig:qrl_setup}
\end{figure}

We consider episodic reinforcement learning in a finite horizon Markov Decision Process \citep{agarwal2019reinforcement} given by a tuple $\left(\mathcal{S}, \mathcal{A},H, \{P_{h}\}_{h \in [0, H-1]},
\{r_{h}\}_{h \in [0, H-1]}\right)$, where $\mathcal{S}$ and $\mathcal{A}$ are the state and the action spaces with cardinalities $S$ and $A$, respectively, $H \in \mathbb{N}$ is the episode length, $P_h(s^\prime|s,a) \in [0,1]$ is the probability of transitioning to state $s^\prime$ from state $s$ provided action $a$ is taken at step $h$ and $r_h(s,a) \in [0,1]$ is the  immediate reward associated with taking action $a$ in state $s$ at step $h$. In our setting, we denote an episode by the notation $k$, and every such episode comprises of $H$ rounds of agent's interaction with the learning environment. In our problem setting, we assume an MDP with a fixed start state $s_0$, where at the start of each new episode the state is reset to $s_0$. We note that the results can be easily extended to the case where starting state is sampled from some distribution. This is because we can have a dummy state $s_0$ which transitions to the next state $s_1$ coming from this distribution, independent of action, and having a reward of $0$. 

We encapsulate agent's interaction with the unknown MDP environment via the architecture as presented in Fig. \ref{fig:qrl_setup}. At an arbitrary time step $h$, given a state $s_h$ and action $a_h$, the environment gives the reward $r_h$ and next state $s_{h+1}$. Furthermore, we highlight that in our proposed architecture this set of signals i.e., $\{s_h, a_h, r_h, s_{h+1}\}$ are collected at the agent as \textit{classical} information. Additionally, our architecture facilitates availability of $S$  \textit{quantum random variables} (q.r.v.) ($X_{1,h}, \cdots, X_{S,h}$) at the agent's end, wherein q.r.v. $X_{i,h}$ generates the random variable $\mathbf{1}\left\{s_{h+1} = i\right\}$. This q.r.v. corresponds to the Hilbert space with basis vectors $|0\rangle$ and $|1\rangle$ and the unitary transformation, given as follows:
\begin{align}
	U|\mathbf{0}\rangle = & \sqrt{1-P_{h}(s_{h+1} = i|s_h,~a_h)} |0 \rangle \nonumber \\
	& ~~~+ \sqrt{P_{h}(s_{h+1} = i|s_h,~a_h)} |1 \rangle.  
\end{align}
We note that these q.r.v.'s can be generated by using a quantum next state from the environment which is given in form of the basis vectors for $S$ states as $|1 0 \cdots 0>$ with $S$ \textit{qubits} for first state and so on till $|0 \cdots 0 1>$ for the last state. Thus, the overall quantum next state is the superposition of these $S$ states with the amplitudes as $\sqrt{P_{h}(s_{h+1} = 1|s_h,~a_h)}, \cdots, \sqrt{P_{h}(s_{h+1} = S|s_h,~a_h)}$, respectively. The $S$ q.r.v.'s correspond to the $S$ \textit{qubits} in this next quantum state. Further, the next state can be obtained as a measurement of this joint next state superposition. Thus, assuming that the quantum environment can generate multiple copies of the next state superposition, all the $S$ q.r.v.'s and the next state measurement can be obtained. 

We note that the agent does not know $\{P_{h}\}_{h \in [0, H-1]}$, which needs to be estimated in the model-based setup. In order to estimate this, we will use the quantum mean estimation approach. This approach needs as quantum evaluation oracle $|\omega\rangle |0\rangle \to |\omega\rangle |\mathbf{1}\left\{s_{h+1} = \omega\right\}\rangle$ for $\omega \in \{0,1\}$. In Section \ref{sec: algo_framework}, we provide more details on how the aforementioned set of quantum indicator variables i.e., $\{{X_{s',h}}\}_{s' \in \mathcal{S}}$ are fed to a specific quantum mean estimation procedure to obtain the transition probability model.

Based on the observations, the agent needs to determine a policy $\pi_h$ which determines action $a_h$ given state $s_h$. For given policy $\pi$ and $h \in \{0,\cdots,H-1\}$, we define the value function $V_h^{\pi}: \mathcal{S} \rightarrow \mathbb{R}$ as
\begin{align}
	V_h^{\pi}(s)=\mathbb{E}\left[\sum_{t=h}^{H-1} r_h\left(s_t, a_t\right) \mid \pi, ~s_h = s\right],    
\end{align}
where the expectation is with respect to the randomness of the trajectory, that is, the randomness in state transitions and the stochasticity of $\pi$. Similarly, the state-action value (or $Q$-value) function $Q_h^\pi: \mathcal{S} \times \mathcal{A} \rightarrow \mathbb{R}$ is defined as 
\begin{align}
	Q_h^\pi(s, a)=\mathbb{E}\left[\sum_{t=h}^{H-1} r_h\left(s_t, a_t\right) \mid \pi, s_h=s, a_h=a\right].    
\end{align}
We also use the notation $V^\pi(s)=V_0^\pi(s)$. Given a state s, the goal of the agent is to find a policy $\pi^*$ that maximizes the value, i.e., the optimization problem the agent seeks to solve is: 
\begin{align}
	\max _\pi ~V^\pi(s).    
\end{align}
Define $Q_h^{\star}(s, a)=\sup _{\pi \in \Pi} ~Q_h^\pi(s, a)$ and $V_h^{\star}(s)=\sup _{\pi \in \Pi} ~V_h^\pi(s)$.  The agent aims to minimize the expected cumulative \textit{ regret} incurred across $K$ episodes:
\begin{align}
	\label{eq:reg}
	\texttt{Regret:}  ~\mathbb{E} \big[KV^{*}(s_0) - \sum_{k = 0}^{K-1}\sum_{h = 0}^{H-1} r(s_h^k, a_h^k) \big].
\end{align}
In the following section, we describe the proposed algorithm and analyze its  regret in Section 5. 
\if 0
Our Algorithm relies on the assumption that we have quantum access to the environment of MDP. Thus, in this section, we now introduce some basic knowledge of quantum computation by using standard quantum notation (Dirac notation) \citep{nielsen2002quantum}.

In Dirac notation, a vector $v=(v_1,\dots,v_m)$ in Hilbert space $\mathbb{C}^m$ is denoted as $|v\rangle$ and called ``\texttt{ket} $v$''. The notation $|i\rangle$ with $i \in [n]$, is reserved for the $i$-th standard basis vector. $|\mathbf{0}\rangle$ is also reserved for the 1st standard basis vector when there is no conflict. We denote $v^{\dagger}$ by the $\texttt{bra}$ notation $\langle v|$ , where $\dagger$ means the Hermitian conjugation. Given $|x\rangle \in \mathbb{C}^m$ and $|y\rangle \in \mathbb{C}^n$, we denote their tensor product by $|x\rangle|y\rangle := (x_1y_1, \cdots, x_m y_n)$.

The input to the quantum environment in Figure \ref{fig:my_label} is a real-valued random variable defined on some probability space. We only consider finite probability spaces due to finite encoding. First, we recall the definition of a random variable:

$\mathcal{H}$ is the Hilbert space with basis vector $\{|s^\prime\rangle\}_{s^\prime \in \mathcal{S}}$. We encode our quantum model by transition probability $P_h(s^\prime|s,a)$.
$$U|\mathbf{0}\rangle = \sum_{s^\prime \in \mathcal{S}} \sqrt{P_h(s^\prime|s,a)} |\psi_{s^\prime} \rangle|s^\prime\rangle$$ for some unknown garbage $|\psi_{s^\prime} \rangle$.  In our case, we can consider the q-random variable $D$ defined on the probability space $(\Omega,P)$ where $D(|\psi_{s^\prime} \rangle|s^\prime\rangle)=1$ or $0$.

\fi

\section{Algorithmic Framework} \label{sec: algo_framework}

In this section, we describe in detail our quantum information assisted algorithmic framework to perform learning of unknown MDP under finite horizon episodic RL setting. In particular, we propose a quantum algorithm that incorporates \textit{model-based} episodic RL procedure originally proposed in the classical setting \citep{azar2017minimax, agarwal2019reinforcement}. In algorithm \ref{algo: UCBVI}, we present Quantum \textit{Upper Confidence Bound} - \textit{Value Iteration} (QUCB-VI) Algorithm which takes number of episodes $K$, length of an episode $H$ and confidence parameter $\delta$ as inputs. In the very first step, the count of visitations corresponding to every state-action pair $(s,a) \in \mathcal{S} \times \mathcal{A}$ are initialized to 0. Subsequently, at the beginning of each episode $K$, the value function estimates for the entire state-space $\mathcal{S}$, i.e., $\{\widehat{V}_{H}^k(s)\}_{s\in \mathcal{S}}$ are set to 0.

Next, for each time instant up to $H-1$, we  update the transition probability model. Here, we utilize the set of \textit{quantum random variable} (q.r.v.)  $\{X_{s',h}^{k}\}_{s' \in \mathcal{S}}$ as introduced in our quantum RL architecture presented in Figure \ref{fig:qrl_setup}. Recall that the elements of $\{X_{s',h}^{k}\}_{s' \in \mathcal{S}}$ are defined at each time step $h$ during episode $k$ as follows:   
\begin{align}
	X_{s',h}^{k} \triangleq \mathrm{1}[s_{h+1}^{k} = s']. \label{eq: X_shk_def}
\end{align}

We define $N_h^k(s, a)$ as the number of times $(s, a)$ is visited before episode $k$. More formally, we have
\begin{align}
	N_h^k(s, a)=\sum_{i=0}^{k-1} \mathbf{1}\big[(s_h^i, a_h^i)=(s, a) \big]. \label{eq: n_hk_def}  
\end{align}

\begin{algorithm}[t!]
\caption{QUCB-VI}
\label{algo: UCBVI}
\begin{algorithmic}[1]
	\State \textbf{Inputs:} $K$, $H$, $\delta \in (0,1]$.
	\State Set ${N}_h^1(s,a) \leftarrow 0, ~\forall ~s \in \mathcal{S}, ~a \in \mathcal{A},~ h \in [0,H-1])$.
	\For {$k=1,\dots,K$}
	\State Set $\widehat{V}_{H}^k(s) \leftarrow 0, ~\forall s \in \mathcal{S}$.
	\For {$h=H-1,\dots,1,0$}.
	\State Set  $\{\hat{P}_{h}^{k}(s' | s,a )\}_{s,a,s') \in \mathcal{S} \times \mathcal{A} \times \mathcal{S}}$ via Eq. \eqref{eq: sub_gaussian_txprob_update}.
	\State Set $b_h^k(s,a) \leftarrow \frac{ \log(SAHK/\delta)}{N_h^k(s,a)}$.
	\State Set $\{\widehat{Q}_h^k(s,a)\}_{s \in \mathcal{S}, a \in \mathcal{A}}$ via Eq. \eqref{eq: Q-func-update}.
	\State Set $\{\widehat{V}_{h}^k(s)\}_{s \in \mathcal{S}}$ via Eq. \eqref{eq: Value_update}.
	\EndFor
	\State Set policies $\{\pi_h^k(s)\}_{s \in \mathcal{S}, h \in [0, H-1]}$ using Eq. \eqref{eq: policy_calc}.
	\State \hspace{-2mm} Get Trajectory $\{s_{h}^{k}, a_{h}^{k}, r_{h}^{k}, s_{h+1}^{k}\}_{h=0}^{H-1}$ via  $\{\pi_h^k\}_{h=0}^{H-1}$.
	\State Set $\{{N}_h^{k+1}(s,a)\}_{s \in \mathcal{S}, a \in \mathcal{A}, h \in [0,H-1]}$ via Eq. \eqref{eq: n_hk_def}.  
	\EndFor
	
\end{algorithmic}
\end{algorithm}

${N}_h^k(s,a)$ indicates the number of samples obtained for $(s,a)$ in the past which will help in efficient averaging to estimate the transition probabilities. With the formulation of q.r.v. $\{X_{s',h}^{k}\}_{s' \in \mathcal{S}}$ in place, we update the transition probability model elements, i.e., $\{\hat{P}_{h}^{k}(s' | s,a )\}_{(s,a,s') \in \mathcal{S} \times \mathcal{A} \times \mathcal{S}}$, in step 6 of algorithm \ref{algo: UCBVI}. To get the estimate of $\hat{P}_{h}^{k}(s' | s,a )$, we use the ${N}_h^k(s,a)$ q.r.v.'s $X_{s',h}^{k}$ for all past $k$. Thus,  $\hat{P}_{h}^{k}(s' | s,a )$ is estimated as:
\begin{align}
\hat{P}_{h}^{k}(s' | s,a ) \leftarrow ~\texttt{SubGaussEst}(X_{s',h}^{k}, {N}_h^k(s,a) ,\delta), \label{eq: sub_gaussian_txprob_update}
\end{align}
where subroutine \texttt{SubGaussEst} as presented in Algorithm 2 of \citep{hamoudi2021quantum} performs mean estimation of q.r.v. $X_{s',h}^{k}$ given ${N}_h^k(s,a)$ collection of samples and confidence parameter $\delta$. We emphasize that step 6 in Algorithm \ref{algo: UCBVI} brings out the key change w.r.t. classical UCB-VI via carefully estimating mean of quantum information collected by agent via interaction with the unknown MDP environment.  In step 7, we set the reward bonus i.e., $b_h^k(s,a)$ which resembles a Bernstein-style UCB bonus, essentially inducing \textit{optimism} in the learnt model. Consequently, step 8-9 compute estimates of $\{{Q}_h^k(s,a)\}_{s \in \mathcal{S}, a \in \mathcal{A}}, ~\{\widehat{V}_{h}^k(s)\}_{s \in \mathcal{S}}$ by adopting the following \textit{Value Iteration} based updates at time step $h$:
\begin{align}
& \widehat{Q}_h^k(s,a) \leftarrow
\min\{H,{r}_{h}^{k}(s, a) 
+ \langle \widehat{V}_{h+1}^k, \widehat{P}_{h}^{k}\left(\cdot|s, a\right) \rangle \nonumber \\
& \hspace{3cm}  + b_h^k(s,a) \},  \label{eq: Q-func-update} \\
& \widehat{V}_{h}^k(s) \leftarrow \underset{a \in \mathcal{A}}{\max} ~\widehat{Q}_h^k(s,a). \label{eq: Value_update} 
\end{align}
This \textit{Value Iteration} procedure (i.e., inner loop consisting of steps 6-9) is executed for $H$ time steps thereby generating a collection of $H$ policies $\{\pi_h^k(s)\}_{s \in \mathcal{S}, h \in [0, H-1]}$ calculated for each pair of $(s,h)$ in step 11 as:
\begin{align}
\pi_h^k(s) \leftarrow \underset{a \in \mathcal{A}}{\arg\max} ~\widehat{Q}_h^k(s,a), \label{eq: policy_calc} 
\end{align}
Next, using the updated policies i.e., $\{\pi_h^k(s)\}_{s \in \mathcal{S}, h \in [0, H-1]}$ which are based on observations recorded till episode $k-1$, the agent collects a new trajectory of $H$ tuples pertaining to episode $k$ i.e., $\{s_{h}^{k}, a_{h}^{k}, r_{h}^{k}, s_{h+1}^{k}\}_{h=0}^{H-1}$ in step 12 starting from initial state reset to $s_0$. Finally, the frequency of agent's visitation to all state action pairs at every time step $h$ over the $k$ episodes i.e., $\{{N}_h^{k+1}(s,a)\}_{s \in \mathcal{S}, a \in \mathcal{A}, h \in [0,H-1]}$ are updated in step 13. Consequently, algorithm \ref{algo: UCBVI} triggers a new episode of agent's interaction with the unknown MDP environment.

\section{Regret Results for the Proposed Algorithm} \label{sec: theory_result}

\subsection{Main Result: Regret Bound for QUCB-VI}
In Theorem \ref{thm: main_regr}, we present the cumulative \textit{ regret} collected upon deploying QUCB-VI in an unknown MDP environment $\mathcal{M}$ (please refer to Section \ref{sec: problem_formln} for definition of MDP)  over a finite horizon of $K$ episodes.
\begin{theorem} \label{thm: main_regr} In an unknown MDP environment $\mathcal{M} \triangleq \left(\mathcal{S}, \mathcal{A},H, \{P_{h}\}_{h \in [0, H-1]},
\{r_{h}\}_{h \in [0, H-1]}\right)$, the  regret incurred by QUCB-VI (algorithm \ref{algo: UCBVI}) across $K$ episodes is bounded as follows:
\begin{align}
	\mathbb{E} \Big[\sum_{k=0}^{K-1} \big(V^{*}(s_0) &- V^{\pi^{k}}(s_0) \big)  \Big] \nonumber \\
	& \leq O (H^2 S^2 A \log^2(SAH^2 K^2 )). \label{eq: _regr}
\end{align}
\end{theorem}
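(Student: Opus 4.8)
The plan is to adapt the classical UCB-VI regret analysis, exploiting the one crucial difference that the quantum estimator of Lemma \ref{lem: SubGau} makes the transition errors decay like $1/N$ rather than $1/\sqrt{N}$; consequently, summing the per-episode error over $k$ produces a harmonic (hence $\log K$) series instead of the $\sqrt{K}$ series of the classical bound. First I would fix a ``good event.'' Applying Lemma \ref{lem: SubGau} to each Bernoulli q.r.v. $X_{s',h}^k$ (whose mean is $P_h(s'|s,a)$ and whose variance is at most $1$) and taking a union bound over all tuples $(s,a,s',h,k)$ at confidence $\delta'=\delta/\mathrm{poly}(S,A,H,K)$, I obtain that with probability at least $1-\delta$,
\[
|\hat P_h^k(s'|s,a)-P_h(s'|s,a)| \le \frac{\log(1/\delta')}{N_h^k(s,a)},
\]
and hence $\|\hat P_h^k(\cdot|s,a)-P_h(\cdot|s,a)\|_1 \le S\log(1/\delta')/N_h^k(s,a)$. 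On this event I would prove \emph{optimism}, $\hat V_h^k(s)\ge V_h^*(s)$ for all $s,h,k$, by backward induction on $h$: the base case $\hat V_H^k\equiv 0$ is trivial, the clipping at $H$ handles the saturated branch, and otherwise $\langle\hat P_h^k,\hat V_{h+1}^k\rangle\ge\langle\hat P_h^k,V_{h+1}^*\rangle$ by the induction hypothesis and nonnegativity of $\hat P_h^k$, so optimism reduces to requiring $b_h^k(s,a)$ to dominate $\langle P_h-\hat P_h^k,V_{h+1}^*\rangle\le H\|\hat P_h^k-P_h\|_1\le HS\log(1/\delta')/N_h^k(s,a)$.

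Given optimism, I would bound each episode's regret by $V^*(s_0)-V^{\pi^k}(s_0)\le \hat V_0^k(s_0)-V_0^{\pi^k}(s_0)$ and expand this difference recursively. Since $\pi^k$ is greedy with respect to $\hat Q^k$, the Bellman-type identity gives, along trajectories generated by $\pi^k$ under the true dynamics,
\[
\hat V_0^k(s_0)-V_0^{\pi^k}(s_0)=\mathbb{E}_{\pi^k,P}\Big[\sum_{h=0}^{H-1}\big(b_h^k(s_h,a_h)+\langle \hat P_h^k(\cdot|s_h,a_h)-P_h(\cdot|s_h,a_h),\hat V_{h+1}^k\rangle\big)\Big].
\]
This is the step that sidesteps martingale convergence: because the theorem measures the \emph{expected} regret, I take a full expectation, and the law of the realized trajectory $\{(s_h^k,a_h^k)\}$ is exactly $\mathbb{E}_{\pi^k,P}$, so each inner expectation can be replaced by the distribution of the actually-visited pairs without invoking any Azuma/Freedman concentration. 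Equivalently, $\mathbb{E}[\sum_k\sum_h r(s_h^k,a_h^k)]=\sum_k\mathbb{E}[V^{\pi^k}(s_0)]$ by the tower property alone, which is precisely what makes the realized regret of Eq. \eqref{eq:reg} equal to $\mathbb{E}[\sum_k(V^*-V^{\pi^k})]$.

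It then remains to sum the bonus and estimation-error terms. On the good event each summand is at most $cHS\log(1/\delta')/N_h^k(s_h^k,a_h^k)$, so I must control $\sum_k\sum_h 1/N_h^k(s_h^k,a_h^k)$. Crucially this is a \emph{pathwise} quantity, deterministic given the trajectories, so no concentration on the visit counts is needed: for each fixed $h$ I group episodes by the visited pair $(s,a)$, and since $N_h^k(s,a)$ increases by one at each visit, the contribution of $(s,a)$ is $\sum_{j=1}^{n}1/j\le 1+\log K$ with $n\le K$ its total visit count. Summing over the $SA$ pairs and $H$ stages gives $\sum_k\sum_h 1/N_h^k\le HSA(1+\log K)$, while the at most $SAH$ first visits (with $N=0$) are handled separately by the trivial per-step bound $H$, contributing a lower-order $O(H^2SA)$ term. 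Multiplying by $cHS\log(1/\delta')$ yields the claimed $O(H^2S^2A\log^2(SAH^2K^2))$, the two logarithmic factors arising respectively from the union-bound confidence $\log(1/\delta')$ and from the harmonic sum $\log K$; finally the bad event, of probability $\le\delta$, contributes at most $\delta\cdot KH$, which is $O(1)$ after choosing $\delta$ polynomially small.

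The main obstacle I anticipate is twofold. First, making optimism go through with a bonus that is simultaneously large enough to dominate the $\ell_1$ transition error $H\|\hat P_h^k-P_h\|_1$ yet small enough to keep the final polynomial factors at $H^2S^2A$; this requires carefully tracking the per-coordinate variance $\sigma_{s'}\le 1$ in Lemma \ref{lem: SubGau} together with the union-bound multiplicity. Second, and conceptually more delicate, is justifying that the expected-regret formulation genuinely removes the need for martingale concentration: one must check that every replacement of an expected visitation by the realized-trajectory law is valid under full expectation, and that the harmonic-sum bound is purely combinatorial and pathwise, since (as the paper stresses) no quantum analogue of Azuma's inequality is currently available.
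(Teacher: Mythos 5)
Your proposal is correct and follows essentially the same route as the paper: the same union-bounded good event from the quantum sub-Gaussian estimator giving $1/N_h^k(s,a)$ transition error, the same backward-induction optimism argument, the same Bellman-style per-episode decomposition into bonus plus model-error terms evaluated under the true trajectory law (with expected regret and the tower property replacing any martingale concentration), the same pathwise harmonic-sum bound $\sum_{k}\sum_{h} 1/N_h^k(s_h^k,a_h^k) \le HSA\log K$, and the same bad-event handling with $\delta$ chosen polynomially small. If anything, you are slightly more careful than the paper on two points it glosses over: the treatment of first visits where $N_h^k(s,a)=0$, and the fact that optimism requires the bonus to dominate $HS\log(SAHK/\delta)/N_h^k(s,a)$, which is larger than the bonus $\log(SAHK/\delta)/N_h^k(s,a)$ literally written in step 7 of Algorithm \ref{algo: UCBVI}.
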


The result obtained via Eq. \eqref{eq: _regr} in Theorem \ref{thm: main_regr} brings out the key advantage of the proposed framework in terms of accelerating the \textit{ regret} convergence rate to $\Tilde{\mathcal{O}}(1)$ against the classical result of $\Tilde{\mathcal{O}} (\sqrt{K})$ \citep{azar2017minimax}.

In order to prove Theorem \ref{thm: main_regr}, we present the following auxiliary mathematical results in the ensuing subsections: bound for probability transition model error pertaining to every state-action pair (section \ref{sec: model_err_SA}); optimism exhibited by the learnt model understood in terms of value functions of the states (section \ref{sec: model optimism}); a supporting result that bounds inverse frequencies of state-action pairs over the entire observed trajectory (section \ref{sec: traj_bound}). Subsequently, we utilize these aforementioned theoretical results to prove Theorem \ref{thm: main_regr} 
in section \ref{sec: thm_proof}.

\subsection{Probability Transition Model Error for state-action pairs} \label{sec: model_err_SA}
\begin{lemma} \label{lemma: model_err_SA} For $k \in \{0, \ldots, {K} - 1 \}$, $s \in \mathcal{S}$, $a \in \mathcal{A}$, $h \in \{0, \ldots, H - 1 \}$, for any $f : \mathcal{S} \rightarrow [0, H]$, execution of QUCB-VI (algorithm \ref{algo: UCBVI}) guarantees that the following holds with probability at least $1-\delta$:
\begin{align}
	\Big|\Big(\hat{P}_{h}^{k}(\cdot | s,a ) - {P}_{h}(\cdot | s,a ) \Big)^{T} f \Big| \leq \frac{HSL}{N_{h}^{k}(s,a)}, \label{eq: model_est_err}
\end{align}
where $L \triangleq {\log (SAHK/\delta)}$ and $\{ N_{h}^{k}(s,a) $, $ \hat{P}_{h}^{k}(s' | s,a ) \}$ are defined in Eq. \eqref{eq: n_hk_def}, \eqref{eq: sub_gaussian_txprob_update}.
\end{lemma}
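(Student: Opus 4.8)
The plan is to bound the transition error in $\ell_1$ and then pass to the scalar $(\hat P_h^k(\cdot|s,a) - P_h(\cdot|s,a))^T f$ by H\"older's inequality, which has the pleasant consequence that a single high-probability event works simultaneously for \emph{every} $f$ with $\|f\|_\infty \le H$ (in particular for the data-dependent $\widehat V_{h+1}^k$ used later in the regret decomposition), so no covering argument or martingale over value functions is needed. Concretely, I would first write $|(\hat P_h^k(\cdot|s,a) - P_h(\cdot|s,a))^T f| \le \|f\|_\infty \, \|\hat P_h^k(\cdot|s,a) - P_h(\cdot|s,a)\|_1 \le H \sum_{s' \in \mathcal{S}} |\hat P_h^k(s'|s,a) - P_h(s'|s,a)|$, reducing the claim to a per-coordinate mean-estimation bound of $L/N_h^k(s,a)$ for each $s'$.

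For the per-coordinate bound I would apply Lemma \ref{lem: SubGau} to the q.r.v. $X_{s',h}^k = \mathbf{1}[s_{h+1}^k = s']$, which is Bernoulli with mean $P_h(s'|s,a)$ and variance $\sigma^2 = P_h(s'|s,a)(1 - P_h(s'|s,a)) \le 1$. Since $\hat P_h^k(s'|s,a)$ is the output of \texttt{SubGaussEst} on $N_h^k(s,a)$ samples, the lemma yields $|\hat P_h^k(s'|s,a) - P_h(s'|s,a)| \le \sigma \log(1/\delta')/N_h^k(s,a) \le L/N_h^k(s,a)$ with probability at least $1-\delta'$, where I choose the confidence $\delta'$ small enough that a union bound over the $S$ coordinates and over all index tuples $(s,a,h,k)$ keeps the total failure probability at most $\delta$; collecting the logarithmic terms produces the factor $L = \log(SAHK/\delta)$ (up to a constant, since $\log(S^2 AHK/\delta) \le 2L$). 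Summing the $S$ coordinate bounds and combining with the H\"older step then gives $HSL/N_h^k(s,a)$ on this event.

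The main obstacle is that $N_h^k(s,a)$, the number of samples handed to \texttt{SubGaussEst}, is random and selected adaptively from the past trajectory, whereas Lemma \ref{lem: SubGau} presumes a \emph{fixed} number of i.i.d.\ samples; this is exactly where a classical analysis invokes a martingale concentration bound that we wish to avoid. I would resolve this by conditioning on the set of past episodes $i < k$ with $(s_h^i, a_h^i) = (s,a)$: by the Markov property the indicators $\{X_{s',h}^i\}$ are, conditionally on this visit pattern, i.i.d.\ $\mathrm{Bernoulli}(P_h(s'|s,a))$, so Lemma \ref{lem: SubGau} applies for each realized value of $N_h^k(s,a)$, and since the resulting guarantee does not depend on the realization it holds unconditionally after averaging over visit patterns. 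Finally, I would dispose of the degenerate regime $N_h^k(s,a) \le L$ (including $N_h^k(s,a)=0$, handled by the usual convention of an infinite bonus, and where the hypothesis $N > \log(1/\delta')$ of Lemma \ref{lem: SubGau} may fail) separately: there the right-hand side $HSL/N_h^k(s,a) \ge HS$ already dominates the deterministic bound $|(\hat P_h^k(\cdot|s,a) - P_h(\cdot|s,a))^T f| \le H \|\hat P_h^k(\cdot|s,a) - P_h(\cdot|s,a)\|_1 \le HS$, so the inequality holds trivially in that case.
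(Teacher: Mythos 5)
Your proposal follows essentially the same route as the paper's proof: reduce to per-coordinate error via $\|f\|_\infty \le H$ and H\"older's inequality, apply Lemma \ref{lem: SubGau} to each indicator q.r.v.\ $X_{s',h}^{k}$ (with variance at most $1$), take a union bound over the index tuples to get the factor $L$, and sum over the $S$ coordinates to arrive at $HSL/N_h^k(s,a)$. The extra care you take---conditioning on the visit pattern so that the fixed-sample-size Lemma \ref{lem: SubGau} applies despite $N_h^k(s,a)$ being random and adaptively determined, including $s'$ in the union bound (costing only a constant), and separately disposing of the degenerate regime $N_h^k(s,a) \le L$ where the lemma's hypothesis $n > \log(1/\delta)$ can fail---patches technicalities that the paper's proof glosses over, but does not alter the substance of the argument.
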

\begin{proof}
To prove the claim in Eq. \eqref{eq: model_est_err}, we consider an arbitrary tuple $(s,a,k,h,f)$ and obtain the following:
\begin{align}
	\Big|\Big(\hat{P}_{h}^{k}(\cdot | s,a )  & - {P}_{h}(\cdot | s,a ) \Big)^{T} f \Big| \nonumber \\
	& \leq \sum_{s' \in \mathcal{S}} f(s')|\hat{P}_{h}^{k}(s' | s,a ) - {P}_{h}(s' | s,a )|, \label{eq: model_est_err_1} \\
	& \leq H \sum_{s' \in \mathcal{S}} |\underbrace{\hat{P}_{h}^{k}(s' | s,a ) - {P}_{h}(s' | s,a )}_{\text{(a)}}|, \label{eq: model_est_err_2}
\end{align}
where Eq. \eqref{eq: model_est_err_2} is due to the definition of $f(\cdot)$ as presented in the lemma statement. Next, in order to analyze (a) in Eq. \eqref{eq: model_est_err_2}, we note the definition of $X_{s',h}^{k}$ presented in Eq. \eqref{eq: X_shk_def}  as an \textit{indicator} q.r.v allows us to write:
\begin{align}
	& \hat{P}_{h}^{k}(s' | s,a ) = \mathbb{E}_{q, \hat{P}_{h}^{k}}[X_{s',h}^{k}|s,a], \label{eq: model_est_err_3} \\
	& {P}_{h}(s' | s,a ) = \mathbb{E}_{q, {P}_{h}}[X_{s',h}^{k}|s,a]. \label{eq: model_est_err_4}
\end{align}
Using the fact that $X_{s',h}^{k}$ is a q.r.v. allows us to directly apply Lemma \ref{lem: SubGau}, thereby further bounding (a) in Eq. \eqref{eq: model_est_err_2} with probability at least $1- \delta$ as follows:
\begin{align}
	|\hat{P}_{h}^{k}(s' | s,a ) & - {P}_{h}(s' | s,a )| \nonumber \\
	& \leq \frac{\log (1/\delta)}{N_{h}^{k}(s,a)}, \\
	& \leq \frac{\log(SAHK/\delta)}{N_{h}^{k}(s,a)}
	= \frac{L}{N_{h}^{k}(s,a)}, \label{eq: model_est_err_5}
\end{align}
where, we emphasize that Eq. \eqref{eq: model_est_err_5} is a consequence of applying union-bound $\forall s,a,h,k$ as well as we use the definition of $L$ provided in the lemma statement. Plugging the bound of term (a) as obtained in Eq. \eqref{eq: model_est_err_5} back into Eq. \eqref{eq: model_est_err_2}, we obtain:
\begin{align}
	\Big|\Big(\hat{P}_{h}^{k}(\cdot | s,a ) - {P}_{h}(\cdot | s,a ) \Big)^{T} f \Big| & \leq H \sum_{s' \in \mathcal{S}} \frac{L}{N_{h}^{k}(s,a)}, \\
	& \leq \frac{HSL}{N_{h}^{k}(s,a)},
\end{align}
which proves the claim of the Lemma.
\end{proof}
\textbf{Interpretation of Lemma \ref{lemma: model_err_SA}:} ~One of the key insights that we draw from this lemma is that the quantum mean estimation with q.r.v. $X_{s',h}^{k}$ allowed the use of Lemma \ref{lem: SubGau}, which facilitated quadratic speed-up of transition probability model convergence. More specifically, our result suggests transition probability model error diminishes with $\Tilde{\mathcal{O}} (\frac{1}{N_{h}^{k}(s,a)})$ speed as opposed to the classical results of $\Tilde{\mathcal{O}} (\frac{1}{\sqrt{N_{h}^{k}(s,a)}})$ \citep{azar2017minimax, agarwal2019reinforcement}.  
\subsection{Optimistic behavior of QUCB-VI} \label{sec: model optimism}
\begin{lemma} \label{lemma: model optimism} Assume that the event described in Lemma \ref{lemma: model_err_SA} is true. Then, the following holds $\forall k$:
\begin{align}
	\widehat{V}_{0}^{k}(s) \geq {V}_{0}^{*}(s), ~\forall s \in \mathcal{S}, 
\end{align}
where $\widehat{V}_{0}^{k}(s)$ is calculated via our QUCB-VI Algorithm and ${V}_{h}^{*}: \mathcal{S} \rightarrow [0,H]$.
\end{lemma}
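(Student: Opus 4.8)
The plan is to prove the stronger pointwise statement $\widehat{V}_{h}^{k}(s) \ge V_{h}^{*}(s)$ for every $s \in \mathcal{S}$ and every $h \in \{0,\dots,H\}$ by backward induction on $h$, descending from $h = H$ down to $h = 0$; the lemma is then the special case $h = 0$. Throughout I would condition on the event of Lemma \ref{lemma: model_err_SA}, which is assumed to hold. For the base case, both the algorithm (line 4 of Algorithm \ref{algo: UCBVI}) and the Bellman recursion set $\widehat{V}_{H}^{k}(s) = 0 = V_{H}^{*}(s)$, so the inequality holds with equality.

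For the inductive step, suppose $\widehat{V}_{h+1}^{k}(s') \ge V_{h+1}^{*}(s')$ for all $s'$. Since $\widehat{V}_{h}^{k}(s) = \max_{a} \widehat{Q}_{h}^{k}(s,a)$ via Eq. \eqref{eq: Value_update} and $V_{h}^{*}(s) = \max_{a} Q_{h}^{*}(s,a)$, it suffices to establish $\widehat{Q}_{h}^{k}(s,a) \ge Q_{h}^{*}(s,a)$ for every $(s,a)$, after which taking the max over $a$ preserves the inequality. I would split on the $\min$ in Eq. \eqref{eq: Q-func-update}: if the minimum equals $H$, then since rewards lie in $[0,1]$ we have $Q_{h}^{*}(s,a) \le H$ and optimism is immediate. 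Otherwise the difference $\widehat{Q}_{h}^{k}(s,a) - Q_{h}^{*}(s,a)$, after cancelling the known reward, equals $\langle \widehat{V}_{h+1}^{k}, \widehat{P}_{h}^{k}(\cdot|s,a)\rangle - \langle V_{h+1}^{*}, P_{h}(\cdot|s,a)\rangle + b_{h}^{k}(s,a)$, which I would rewrite as $\langle \widehat{V}_{h+1}^{k} - V_{h+1}^{*}, P_{h}(\cdot|s,a)\rangle + \langle \widehat{V}_{h+1}^{k}, \widehat{P}_{h}^{k}(\cdot|s,a) - P_{h}(\cdot|s,a)\rangle + b_{h}^{k}(s,a)$. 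The first term is nonnegative because $P_{h}(\cdot|s,a)$ is a probability vector and $\widehat{V}_{h+1}^{k} \ge V_{h+1}^{*}$ by the induction hypothesis.

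The crux — and the step I expect to be the main obstacle — is controlling the second, signed term. Because the $\min$-truncation in Eq. \eqref{eq: Q-func-update} guarantees $\widehat{V}_{h+1}^{k} : \mathcal{S} \to [0,H]$, I would invoke Lemma \ref{lemma: model_err_SA} with $f = \widehat{V}_{h+1}^{k}$ to obtain $\big|\langle \widehat{V}_{h+1}^{k}, \widehat{P}_{h}^{k}(\cdot|s,a) - P_{h}(\cdot|s,a)\rangle\big| \le \frac{HSL}{N_{h}^{k}(s,a)}$. For the full difference to be nonnegative I then need the bonus to dominate this error, i.e. $b_{h}^{k}(s,a) \ge \frac{HSL}{N_{h}^{k}(s,a)}$. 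This matching condition is precisely what makes the induction close, and it forces the exploration bonus to scale like $\frac{HSL}{N_{h}^{k}(s,a)}$ — consistent with the $H^2 S^2 A$ factor in Theorem \ref{thm: main_regr} — so the delicate point will be verifying that the bonus set in line 7 is genuinely large enough to absorb the $\frac{HSL}{N_{h}^{k}(s,a)}$ model-error term. Once this is checked, $\widehat{Q}_{h}^{k}(s,a) \ge Q_{h}^{*}(s,a)$ follows, the max over $a$ completes the inductive step, and evaluating at $h = 0$ yields the lemma.
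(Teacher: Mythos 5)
Your proposal follows the same route as the paper's proof: backward induction on $h$ with base case $\widehat{V}_H^k = V_H^* = 0$, a case split on the $\min$ in Eq.~\eqref{eq: Q-func-update}, and an application of Lemma~\ref{lemma: model_err_SA} to absorb the model-estimation error into the bonus. The only structural difference is the decomposition in the untruncated case: you write the difference as $\langle \widehat{V}_{h+1}^k - V_{h+1}^*, P_h(\cdot|s,a)\rangle + \langle \widehat{V}_{h+1}^k, \widehat{P}_h^k(\cdot|s,a) - P_h(\cdot|s,a)\rangle + b_h^k(s,a)$ and invoke Lemma~\ref{lemma: model_err_SA} with $f = \widehat{V}_{h+1}^k$, whereas the paper first uses the induction hypothesis to replace $\widehat{V}_{h+1}^k$ by $V_{h+1}^*$ inside $\langle \widehat{P}_h^k(\cdot|s,a), \cdot\rangle$ and then applies the lemma with $f = V_{h+1}^*$. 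The two are interchangeable; if anything yours is slightly more careful, since the paper's first step silently requires $\widehat{P}_h^k(\cdot|s,a)$ to be entrywise nonnegative (not guaranteed for a quantum mean estimate), while yours only needs the true $P_h$ to be a probability vector together with $\widehat{V}_{h+1}^k \in [0,H]$ --- the upper bound is the truncation, and the nonnegativity deserves one sentence in your write-up (it holds inductively once the bonus dominates the error).

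The one point you leave open --- whether the bonus of line 7 satisfies $b_h^k(s,a) \geq HSL/N_h^k(s,a)$ --- is exactly where the paper is internally inconsistent, so your hesitation is well-founded, but it is not a gap you could have closed from the printed algorithm: Algorithm~\ref{algo: UCBVI} sets $b_h^k(s,a) = L/N_h^k(s,a)$ with $L = \log(SAHK/\delta)$, which is a factor $HS$ too small, yet the paper's proof simply asserts $b_h^k(s,a) - HSL/N_h^k(s,a) = 0$ in Eq.~\eqref{eq: model optimism eq 3}, i.e., it tacitly treats the bonus as $HSL/N_h^k(s,a)$. The regret analysis makes the same tacit assumption (it bounds $b_h^k + HSL/N_h^k$ by $2HSL/N_h^k$ in Eq.~\eqref{eq: main_regr_eq 4}), and the $H^2S^2A$ factor in Theorem~\ref{thm: main_regr} corresponds to this larger bonus. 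So with the bonus corrected to $b_h^k(s,a) = HSL/N_h^k(s,a)$, your induction closes verbatim; with the bonus as literally printed, neither your argument nor the paper's goes through.
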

\begin{proof}
To prove the lemma statement, we proceed via mathematical induction. Firstly, we highlight that the following holds at time step $H$:
\begin{align}
	\hat{V}_{H}^{k}(s) = {V}_{H}^{*}(s) = 0, ~\forall s \in \mathcal{S},
\end{align}
In the next step, assume that $\hat{V}_{h+1}^{k}(s) \geq {V}_{h+1}^{*}(s)$. If $\hat{Q}_{h}^{k}(s,a) = H$, then $\hat{Q}_{h}^{k}(s,a) \geq {Q}_{h}^{*}(s,a)$ since ${Q}_{h}^{*}(s,a)$ can be atmost $H$. Otherwise, at time step $h$, we obtain:
\begin{align}
	& \hat{Q}_{h}^{k}    (s,a) - {Q}_{h}^{*}(s,a)  \nonumber \\
	& = b_{h}^{k}(s,a) +  \langle \hat{P}_{h}^{k}(\cdot | s,a), \hat{V}_{h}^{k} \rangle - \langle {P}_{h}(\cdot | s,a), {V}_{h}^{*} \rangle,\\
	& \geq b_{h}^{k}(s,a) +  \langle \hat{P}_{h}^{k}(\cdot | s,a), {V}_{h}^{*} \rangle - \langle {P}_{h}(\cdot | s,a), {V}_{h}^{*} \rangle, \label{eq: model optimism eq 1} \\
	& = b_{h}^{k}(s,a) \nonumber \\
	& ~~~+ \sum_{s' \in \mathcal{S}}\big(  \hat{P}_{h}^{k}(s' | s,a) - {P}_{h}(s' | s,a) \big){V}_{h}^{*}(s'), \\
	& \geq b_{h}^{k}(s,a) - \frac{HSL}{N_{h}^{k}(s,a)}, \label{eq: model optimism eq 2} \\
	& = 0, \label{eq: model optimism eq 3}
\end{align}
where Eq. \eqref{eq: model optimism eq 1} is due to the induction assumption. Furthermore, Eq. \eqref{eq: model optimism eq 2}, \eqref{eq: model optimism eq 3}  are owed to Lemma \ref{lemma: model_err_SA} and definition of bonus in step 7 of Algorithm \ref{algo: UCBVI}, respectively. Hence, we have $\hat{Q}_{h}^{k}(s,a) \geq {Q}_{h}^{*}(s,a)$. Using \textit{Value Iteration} computations in Eq. \eqref{eq: Q-func-update} - \eqref{eq: Value_update},  we obtain $\hat{V}_{h}^{k}(s) \geq {V}_{h}^{*}(s), ~\forall h$.

This completes the proof.
\end{proof}
\textbf{Interpretation of Lemma \ref{lemma: model optimism}:} ~This Lemma reveals that QUCB-VI (algorithm \ref{algo: UCBVI}) outputs estimates of the value function which are always lower bounded by the true value at each time step, thereby exhibiting similar optimistic behavior as the classical UCB-VI algorithm. Interestingly, faster convergence properties of QUCB-VI's transition model (i.e., Lemma \ref{lemma: model_err_SA}) complemented the usage of a sharper  bonus term (i.e., $b_{h}^{k}(s,a)$ defined in algorithm \ref{algo: UCBVI}) instead of  the bonus terms of the classical algorithm, while keeping the optimism behavior of the model intact.
\subsection{Trajectory Summation Bound Characterization} \label{sec: traj_bound}
Next, we present a technical result bounding inverse of observed state-action pair frequencies over agent's trajectory collected across all the episodes in Lemma \ref{lemma: trajectory_sum}.
\begin{lemma} \label{lemma: trajectory_sum}
Assume an arbitrary sequence of trajectories $\{s_{h}^{k}, a_{h}^{k} \}_{h = 0}^{H-1}$ for $k = 0, \cdots, K-1$. Then, the following result holds:
\begin{align}
	\sum_{k = 0}^{K-1} \sum_{h=0}^{H-1} \frac{1}{N_{h}^{k}(s_h^k, a_h^k)} \leq HSA \log (K).
\end{align}
\end{lemma}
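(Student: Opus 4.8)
The plan is to collapse the double sum over the trajectory into a family of harmonic series, one for each triple $(s,a,h)$, and then bound each harmonic series by $O(\log K)$. The summation as written follows the realized trajectory, where at step $h$ of episode $k$ the visited pair $(s_h^k,a_h^k)$ determines the denominator $N_h^k(s_h^k,a_h^k)$; the first move is to reorganize this so that we instead fix a pair and track how its visitation count grows.

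First I would rewrite the sum by inserting indicators and swapping the order of summation. Since at each fixed $(k,h)$ exactly one pair $(s,a)$ is visited, the identity
\[
\sum_{k=0}^{K-1}\sum_{h=0}^{H-1} \frac{1}{N_h^k(s_h^k, a_h^k)} = \sum_{h=0}^{H-1} \sum_{(s,a)\in\mathcal{S}\times\mathcal{A}} \sum_{k=0}^{K-1} \frac{\mathbf{1}[(s_h^k, a_h^k) = (s,a)]}{N_h^k(s,a)}
\]
holds purely as a finite reindexing. Next, for a fixed triple $(s,a,h)$ I would enumerate the episodes at which that pair is visited at step $h$ as $k_1 < k_2 < \cdots < k_n$, where $n \le K$ is the total number of such visits. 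By the definition in Eq. \eqref{eq: n_hk_def}, $N_h^{k_j}(s,a)$ counts exactly the visits strictly before episode $k_j$, so along this subsequence the counter takes the successive values $0,1,\ldots,n-1$; equivalently, adopting the standard clipping of the denominator at $1$ on the first visit, it ranges over $1,2,\ldots,n$. The inner contribution of the triple is therefore a partial harmonic sum,
\[
\sum_{j=1}^{n} \frac{1}{j} \le 1 + \log n \le 1 + \log K.
\]

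Finally, I would sum over the at most $SAH$ triples $(s,a,h)$, each contributing at most $1+\log K$, to obtain the claimed $O(HSA\log K)$ bound (absorbing the additive constant into the $\log K$ factor, using $K$ large). The only delicate point is the treatment of the first visit: under the literal definition $N_h^{k}(s,a)$ equals $0$ there, so $1/N_h^k$ is undefined, and the argument must invoke the clipping convention (or the fact that the estimator is only meaningfully queried once $N_h^k \ge 1$) to replace the divergent $1/0$ term by $1$. Pinning down this convention, and hence whether each triple yields exactly $\log K$ or $1 + \log K$, is the main bookkeeping obstacle; everything else is a routine reindexing together with the elementary harmonic-series estimate.
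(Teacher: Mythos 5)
Your proof takes essentially the same route as the paper's: swap the order of summation, reindex the trajectory sum for each fixed triple $(s,a,h)$ as a partial harmonic series over that pair's visitation counts, and bound each series by $O(\log K)$, giving $HSA$ such contributions in total. If anything, you are more careful than the paper, which silently ignores the division-by-zero at each first visit (where $N_h^k(s,a)=0$ under Eq.~\eqref{eq: n_hk_def}) and uses the bound $\sum_{i=1}^{N} 1/i \leq \log N$ (stated there with a typo as $\log(i)$), which is off by the additive constant $1$ that you correctly retain and then absorb.
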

\vspace{-1cm}
\begin{proof}
We change order of summations to obtain:
\begin{align}
	\sum_{k = 0}^{K-1} &\sum_{h = 0}^{H-1} \frac{1}{N_{h}^{k}(s_h^k, a_h^k)} \nonumber \\
	&= \sum_{h = 0}^{H-1} \sum_{k = 0}^{K-1} \frac{1}{N_{h}^{k}(s_h^k, a_h^k)}, \\
	& = \sum_{h = 0}^{H-1} \sum_{(s,a) \in \mathcal{S} \times \mathcal{A}} \sum_{i = 1}^{N_{h}^{K}(s,a)} \frac{1}{i}, \\
	& \leq \sum_{h = 0}^{H-1} \sum_{(s,a) \in \mathcal{S} \times \mathcal{A}} \log (N_{h}^{K}(s,a)), \label{eq: trajectory_sum_1} \\
	& \leq HSA ~{\max}_{\mathcal{S} \times \mathcal{A} \times [0,H-1]} \log (N_{h}^{K}(s,a)), \\
	& \leq HSA \log (K),
\end{align}
where Eq. \eqref{eq: trajectory_sum_1} is due to the fact that $\sum_{i=1}^{N} 1/i \leq \log (i)$.

This completes the proof of the lemma statement.
\end{proof}

\subsection{Proof of Theorem \ref{thm: main_regr}} \label{sec: thm_proof}
To prove Theorem \ref{thm: main_regr}, we first note that the following holds for episode $k$:
\begin{align}
V^{*}(s_0) & - V^{\pi^{k}}(s_0) \nonumber \\
&\leq \hat{V}_{0}^{k}(s_0) - V_{0}^{\pi^{k}}(s_0), \label{eq: sim_lemma 1} \\
& = \hat{Q}_{0}^{k}(s_0, \pi^{k}(s_0)) - {Q}_{0}^{\pi^{k}}(s_0, \pi^{k}(s_0)), \\
& \leq r_{0}^{k}(s_0, \pi^{k}(s_0)) + b_{0}^{k}(s_0, \pi^{k}(s_0))   \nonumber \\
& ~~~+ \langle \widehat{V}_{1}^k, \widehat{P}_{0}^{k}\left(\cdot|s_0, \pi^{k}(s_0) \right) \rangle - r_{0}^{k}(s_0, \pi^{k}(s_0)) \nonumber \\
& ~~~- \langle {V}_{1}^{\pi^{k}},{P}_{0}\left(\cdot|s_0, \pi^{k}(s_0) \right) \rangle, \label{eq: sim_lemma 2} \\
& = b_{0}^{k}(s_0, \pi^{k}(s_0)) + \langle \widehat{V}_{1}^k,\widehat{P}_{0}^{k}\left(\cdot|s_0, \pi^{k}(s_0) \right) \rangle \nonumber \\
& ~~~- \langle {V}_{1}^{\pi^{k}},{P}_{0}\left(\cdot|s_0, \pi^{k}(s_0) \right) \rangle, \\
& = b_{0}^{k}(s_0, \pi^{k}(s_0)) \nonumber \\
& ~~~+ \langle \widehat{V}_{1}^k,\widehat{P}_{0}^{k}\left(\cdot|s_0, \pi^{k}(s_0) \right) - {P}_{0}\left(\cdot|s_0, \pi^{k}(s_0)\right) \rangle \nonumber \\
& ~~~+ \underbrace{\langle \widehat{V}_{1}^k - {V}_{1}^{\pi^{k}}, {P}_{0}\left(\cdot|s_0, \pi^{k}(s_0) \right) \rangle}_{\text{(a)}} \label{eq: sim_lemma 3} \\ 
& = \sum_{h=0}^{H-1} \mathbb{E}_{s_h, a_h \sim d_{h}^{\pi^k}}\Big[b_{h}^{k}(s_h,a_h) \nonumber \\
& ~~~+ \underbrace{\langle  \hat{P}_{h}^{k}(\cdot | s_h,a_h) - {P}_{h}(\cdot | s_h,a_h) , \hat{V}_{h+1}^{\pi^{k}} \rangle}_\text{(b)} \Big], \label{eq: main_regr_eq 1} 
\end{align}
where Eq. \eqref{eq: sim_lemma 1} is due to Lemma \ref{lemma: model optimism}. Eq. \eqref{eq: sim_lemma 2} uses the definition of $Q$ function in Eq. \eqref{eq: Q-func-update} and the fact that $\min \{a,b \} \leq a$. In Eq. \eqref{eq: sim_lemma 3}, we identify that term (a) is the 1-step recursion of RHS of Eq. \eqref{eq: sim_lemma 1}. Consequently, we obtain Eq. \eqref{eq: main_regr_eq 1} where the expectation is w.r.t. the intermediate trajectories $d_{h}^{\pi^k}$ generated via policies $\{ \pi_{i}^{k}\}_{i = 0}^{h-1}$.

Let us denote the event described in Lemma \ref{lemma: model_err_SA} as $\mathcal{E}$. Formally, $\mathcal{E}$ is described as follows for some $\delta \in [0,1]$:
\begin{align}
\mathcal{E} \triangleq \mathrm{1}\Big[|\hat{P}_{h}^{k}(s' | s,a ) & - {P}_{h}(s' | s,a )| \leq \frac{\log (1/\delta)}{N_{h}^{k}(s,a)}, ~\forall s,a,s' \Big]. \nonumber
\end{align}
Then, assuming that $\mathcal{E}$ is true, term (b) in Eq. \eqref{eq: main_regr_eq 1} can be further bounded as follows:
\begin{align}
\langle  \hat{P}_{h}^{k}&(\cdot |  s_h,a_h)  - {P}_{h}(\cdot | s_h,a_h) , \hat{V}_{h+1}^{\pi^{k}} \rangle \nonumber \\
& \leq \|\hat{P}_{h}^{k}(\cdot | s_h,a_h) - {P}_{h}(\cdot | s_h,a_h) \|_{1} \| \hat{V}_{h+1}^{\pi^{k}}\|_{\infty}, \label{eq: main_regr_eq 2} \\
& \leq \frac{HSL}{N_{h}^{k}(s,a)}, \label{eq: main_regr_eq 3}
\end{align}
where Eq. \eqref{eq: main_regr_eq 2}, \eqref{eq: main_regr_eq 3}  are consequences of Holder's inequality and Lemma \ref{lemma: model_err_SA} respectively. By plugging  the bound for term (a) as obtained in Eq. \eqref{eq: main_regr_eq 3} back into RHS of Eq. \eqref{eq: main_regr_eq 1}, we have:
\begin{align}
V^{*} & (s_0)  - V^{\pi^{k}}(s_0) \nonumber \\
& \leq \sum_{h=0}^{H-1} \mathbb{E}_{s_h, a_h \sim d_{h}^{\pi^k}}\Big[b_{h}^{k}(s_h,a_h) + \frac{HSL}{N_{h}^{k}(s,a)} \Big],  \label{eq: thm_val_diff1} \\
& \leq  \sum_{h=0}^{H-1} \mathbb{E}_{s_h, a_h \sim d_{h}^{\pi^k}}\Big[ 2 \frac{HSL}{N_{h}^{k}(s,a)}\Big], \label{eq: main_regr_eq 4} \\
& = 2 HSL \cdot \mathbb{E}\Big[ \sum_{h=0}^{H-1} \frac{1}{N_{h}^{k}(s,a)} \big| \mathcal{H}_{<k} \Big], \label{eq: main_regr_eq 5}
\end{align}
where Eq. \eqref{eq: main_regr_eq 4} is owed to the definition of  bonus $b_{h}^{k}$ in algorithm 1. Further, in Eq. \eqref{eq: main_regr_eq 5}, the expectation is w.r.t. trajectory $\{s_h^{k}, a_{h}^{k}\}$ generated via policy $\pi^{k}$ while conditioning on history collected till end of episode $k-1$, i.e., $\mathcal{H}_{<k}$. Summing up across all the episodes and taking into account success/failure of event $\mathcal{E}$, we obtain the following:
\begin{align}
\mathbb{E} & \Big[\sum_{k = 0}^{K-1} V^{*}(s_0) - V^{\pi^{k}}(s_0) \Big] \nonumber \\
& = \mathbb{E} \Big[ \mathrm{1}[\mathcal{E}] \Big(\sum_{k = 0}^{K-1} V^{*}(s_0) - V^{\pi^{k}}(s_0) \Big)\Big] \nonumber \\
& ~~~+ \mathbb{E} \Big[\mathrm{1}[\overline{\mathcal{E}}] \Big(\sum_{k = 0}^{K-1} V^{*}(s_0) - V^{\pi^{k}}(s_0) \Big) \Big], \\
& \leq \mathbb{E} \Big[ \mathrm{1}[\mathcal{E}] \Big(\sum_{k = 0}^{K-1} V^{*}(s_0) - V^{\pi^{k}}(s_0) \Big)\Big] + 2\delta KH, \label{eq: main_regr_eq 6} \\
& \leq 2 HSL \cdot \mathbb{E}\Big[ \sum_{k = 0}^{K-1} \sum_{h=0}^{H-1} \frac{1}{N_{h}^{k}(s,a)} \Big] + 2\delta KH, \label{eq: main_regr_eq 7} \\
& \leq 2 H^{2}S^{2}A L \log (K) + 2\delta KH, \label{eq: main_regr_eq 8}
\end{align}
where Eq. \eqref{eq: main_regr_eq 6} is owed to the facts that value functions $\{V^{*}, V^{\pi^{k}} \}$ are bounded by $H$ and the failure probability is at most $\delta$. Next, we obtain Eq. \eqref{eq: main_regr_eq 7} by leveraging Eq. \eqref{eq: main_regr_eq 5} when event $\mathcal{E}$ is successful. Eq. \eqref{eq: main_regr_eq 8} is a direct consequence of Lemma \ref{lemma: trajectory_sum}.
\begin{figure*}[htb!]
\centering
\subfigure[ \scriptsize Riverswim-6. $S = 6, ~A = 2.$]{
	\includegraphics[scale = 0.34]{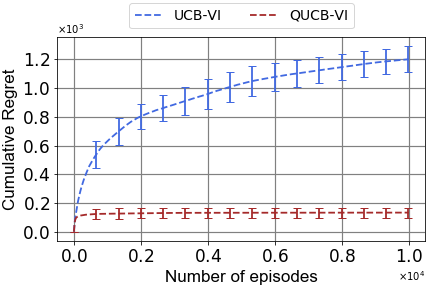}
	\label{fig:riverswim6_regr}
}
\subfigure[  \scriptsize Riverswim-12. $S = 12, ~A = 2.$]{
	\includegraphics[scale = 0.34]{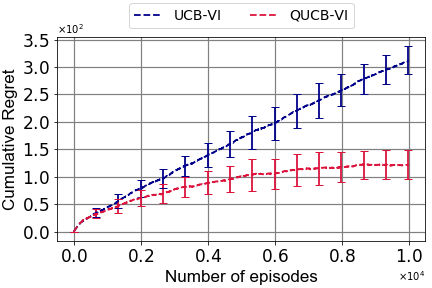}
	\label{fig:riverswim12_regr}
}
\subfigure[ \scriptsize  Grid-world. $S = 20, ~A = 4.$ ]{
	\includegraphics[scale = 0.34]{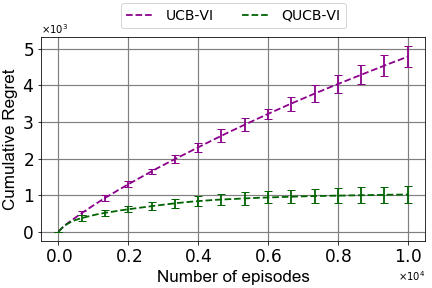}
	\label{fig: gridworld_regr}
}    
\caption{Cumulative Regret incurred by QUCB-VI (algorithm \ref{algo: UCBVI}) and UCB-VI algorithm for various RL environments.}
\label{fig: regret_fig}
\vspace{0in}
\end{figure*}

By setting, $\delta = 1/KH$, we obtain:
\begin{align}
\mathbb{E} & \Big[\sum_{k = 0}^{K-1} V^{*}(s_0) - V^{\pi^{k}}(s_0) \Big] \nonumber \\
& \leq 2H^2 S^2 A \log^2(SAH^2 K^2 ) + 2, \\
& = O (H^2 S^2 A \log^2(SAH^2 K^2 )).
\end{align}

This completes the proof of the theorem.

The choice of improved reward bonus term  manifests itself in Eq. \eqref{eq: thm_val_diff1}-\eqref{eq: main_regr_eq 5} and plays a significant role towards QUCB-VI's overall \textit{regret} improvement. Further, we note that the Martingale style proof approach and the corresponding Azuma Hoeffding's inequality, which are typical in the regret bound proof of classical UCB-VI,  are not used in the analysis of QUCB-VI algorithm.
\section{Numerical Evaluations} \label{sec: experiments}

In this section, we analyze the performance of QUCB-VI (Algorithm \ref{algo: UCBVI}) via proof-of-concept experiments on multiple RL environments. Furthermore, we investigate the viability of our
methodology against its classical counterpart UCB-VI \citep{azar2017minimax, agarwal2019reinforcement}. To this end, we first conduct our empirical evaluations on \textit{RiverSwim-6} environment comprising of 6 states and 2 actions, which is an extensively used environment for benchmarking \textit{model-based} RL frameworks \citep{osband2013more, tossou2019near, chowdhury2022differentially}. Next, we extend our testing setup to include \textit{Riverswim-12} with 12 states and 2 actions. Finally, we construct a \textit{Grid-world} environment \citep{sutton2018reinforcement} comprising of a $7 \times 7$ sized grid and characterized by 20 states and 4 actions.

\textbf{Simulation Configurations:} ~In our experiments, for all the aforementioned environments, we conduct training across $K = 10^4$ episodes, and every episode consists of $H=20$ time-steps. The environment is reset to a fixed initial state at the beginning of each episode. Furthermore, we perform 20 independent Monte-Carlo simulations and collect episode-wise cumulative regret incurred by QUCB-VI, and baseline UCB-VI algorithms. In our implementation of QUCB-VI, we accumulate the estimates of state transition probability model based on uniform sample from the actual transition probability within a window governed by the quantum mean estimation error.

\if 0
\begin{remark}
We highlight that classical implementation of QUCB-VI is facilitated by the closed form of measurement output distribution pertaining to quantum amplitude estimation procedures as corroborated by Theorem 11 in \citep{brassard2002quantum}. Consequently, this simulation strategy has been adopted in prior works in QRL ~\citep{wan2022quantum}.
\end{remark}
\fi 

\textbf{Interpretation of Results} ~In Figure \ref{fig: regret_fig}, we report our experimental results in terms of cumulative regret of agents rewards incurred against number of training episodes for each RL environment. In Fig. \ref{fig:riverswim6_regr}- \ref{fig: gridworld_regr}, we note that QUCB-VI significantly outperforms classical UCB-VI with a noticeable margin, while QUCB-VI also achieves model convergence within the chosen number of training episodes. These observations support the performance gains in terms of convergence speed of the proposed algorithm as revealed in our theoretical analysis of regret. In Fig. \ref{fig:riverswim12_regr}-\ref{fig: gridworld_regr}, we observe that classical UCB-VI suffers an increasingly linear trend in regret growth. This
indicates that in environments such as \textit{RiverSwim-12}, \textit{Grid-world} which are characterized by large diameter MDPs, it is necessary to increase training episodes in order to ensure sufficient exploration by the RL agent with classical environment. This demonstrates that quantum computing helps in significantly faster convergence.
\section{Conclusion and Future Work} \label{sec: conclusion}
We propose a Quantum information assisted \textit{model-based} RL methodology that  facilitates an agent's learning in an unknown MDP environment. To this end, we first present a carefully engineered architecture modeling agent's interaction with the environment at every time step. Consequently, we outline QUCB-VI Algorithm that suitably incorporates an efficient quantum mean estimator, leading to exponential theoretical convergence speed improvements in contrast to classical UCB-VI proposed in \citep{azar2017minimax}. Finally, we report evaluations on a set of benchmark RL environment which support the efficacy of QUCB-VI algorithm. As a future work, %
it will be worth exploring  whether the benefits can be translated to \textit{model-free} as well as continual RL  settings.

\bibliography{uai2023-template}

\end{document}